\newtheorem{theorem}{Theorem}
\newtheorem{definition}{Definition}
\title{Peptide Vaccine Design by Evolutionary Multi-Objective Optimization}
\author{
Dan-Xuan Liu
\and
Yi-Heng Xu\And
Chao Qian
\affiliations
National Key Laboratory for Novel Software Technology, Nanjing University, Nanjing 210023, China\\
School of Artificial Intelligence, Nanjing University, Nanjing 210023, China\\
\emails
liudx@lamda.nju.edu.cn,
201300025@smail.nju.edu.cn,
qianc@lamda.nju.edu.cn
}
\begin{document}

\maketitle

\begin{abstract}
Peptide vaccines are growing in significance for fighting diverse diseases. Machine learning has improved the identification of peptides that can trigger immune responses, and the main challenge of peptide vaccine design now lies in selecting an effective subset of peptides due to the allelic diversity among individuals. Previous works mainly formulated this task as a constrained optimization problem, aiming to maximize the expected number of peptide-Major Histocompatibility Complex (peptide-MHC) bindings across a broad range of populations by selecting a subset of diverse peptides with limited size; and employed a greedy algorithm, whose performance, however, may be limited due to the greedy nature. In this paper, we propose a new framework PVD-EMO based on Evolutionary Multi-objective Optimization, which reformulates Peptide Vaccine Design as a bi-objective optimization problem that maximizes the expected number of peptide-MHC bindings and minimizes the number of selected peptides simultaneously, and employs a Multi-Objective Evolutionary Algorithm (MOEA) to solve it. We also incorporate warm-start and repair strategies into MOEAs to improve efficiency and performance. We prove that the warm-start strategy ensures that PVD-EMO maintains the same worst-case approximation guarantee as the previous greedy algorithm, and meanwhile, the EMO framework can help avoid local optima. Experiments on a peptide vaccine design for COVID-19, caused by the SARS-CoV-2 virus, demonstrate the superiority of PVD-EMO.

\end{abstract}

\section{Introduction}
Peptide vaccines, composed of a set of peptides with the ability to selectively activate and proliferate T cells, are becoming increasingly essential in clinical treatment of a variety of diseases, including human immunodeficiency virus~\cite{HIV1,HIV2}, Alzheimer's disease~\cite{Alzheimer2,Alzheimer1}, and various forms of cancer~\cite{cancer1,cancer2}. Compared to traditional live-attenuated vaccines, peptide vaccine design offers a high degree of specificity and avoids the risks of infection caused by using entire pathogens.

%Epitopes on peptides have the potential to bind to Major Histocompatibility Complex (MHC) molecules within the human body, thereby eliciting specific immune responses. Machine learning has significantly aided peptides (epitopes) identification, which involves identifying peptides that can trigger the immune responses~\cite{NetMHCpan1,NetMHCpan2}. Many impressive works  have benefited from the development of enhanced peptides prediction methods~\cite{baruah2020immunoinformatics,bhattacharya2020development}. After predicting pathogen peptides (epitopes) of a pathogen, the primary challenge in peptide vaccine design is the selection of an effective peptide subset. This task is complicated by the extensive allelic diversity of human MHC molecules, which results in significant variability in peptide-MHC combinations across individuals~\cite{robinson2020ipd,barker2023ipd}. Therefore, selecting a subset of diverse peptides that can be universally effective for a broad population faces a considerable challenge.

Epitopes on peptides can bind to Major Histocompatibility Complex (MHC) molecules within the human body, thereby eliciting specific immune responses. Machine learning has greatly aided peptide (epitope) identification, involving detecting peptides that trigger immune responses~\cite{NetMHCpan1,NetMHCpan2}. Many impressive works have benefited from advanced peptide prediction methods~\cite{baruah2020immunoinformatics,bhattacharya2020development}. After predicting pathogen peptides, the main challenge in peptide vaccine design is selecting an effective peptide subset. This task is complicated by the extensive allelic diversity of human MHC molecules, resulting in significant variability in peptide-MHC combinations across individuals~\cite{robinson2020ipd,barker2023ipd}. Therefore, selecting a diverse subset of peptides that can be universally effective for a broad population poses a considerable challenge.

\subsection{Related Work}
%Most previous work on peptide vaccine design typically relies on discrete optimization methods, such as integer linear programming and genetic algorithms, to maximize population coverage~\cite{oyarzun2015computer}. Liu \textit{et al.}~\shortcite{LiuD0G22} highlighted the importance of incorporating redundancy to enhance the likelihood of effective immunogenic peptide presentation. They approached peptide vaccine design as a maximum $N$-times coverage problem, with the goal of selecting peptides that guarantee at least $N$ immunogenic peptide-MHC interactions per individual, solving it through heuristic methods. However, the maximum $N$-times coverage problem is NP-hard, and due to its non-submodular nature, it does not permit polynomial-time constant factor approximation. This formulation also tends to underestimate the utility of a peptide until $N$-times coverage is reached.

Most previous work on peptide vaccine design typically relies on discrete optimization methods, such as integer linear programming and genetic algorithms, to maximize population coverage~\cite{oyarzun2015computer}. Liu \textit{et al.}~\shortcite{LiuD0G22} emphasized the importance of incorporating redundancy to boost the likelihood of effective immunogenic peptide presentation. They treated peptide vaccine design as a maximum $N$-times coverage problem, aiming to select peptides that ensure at least $N$ immunogenic peptide-MHC interactions per individual, and solved it with heuristic methods. However, this problem is NP-hard and, being non-submodular, does not allow polynomial-time constant factor approximation. This formulation also tends to underestimate the utility of a peptide until $N$-times coverage is achieved.

Dai \textit{et al.}~\shortcite{dai2023constrained} formulated peptide vaccine design as a constrained optimization problem, aiming to maximize the expected number of peptide-MHC bindings across a broad range of populations while choosing a limited set of diverse peptides. They introduced constraints on the size of the peptide subset and promoted dissimilar redundancies to avoid correlated failures, leading to the cardinality and pairwise constraints. The objective function, i.e., the expected number of peptide-MHC bindings across a wide population range, is proved to be both monotone and submodular, which enables their greedy algorithm, Optivax-P, to provide an approximation guarantee of $\max\{\mathrm{OPT}/(1+\Delta),\mathrm{OPT}_2/2\}$, where $\Delta> 0$, $\mathrm{OPT}$ denotes the optimal function value, and $\mathrm{OPT}_2$ is the optimal function value under additional constraints.

\subsection{Our Contribution}
Considering that the performance of the greedy algorithm may be limited due to its greedy nature, this paper proposes a new framework based on Evolutionary Multi-objective Optimization~\cite{knowles2001reducing,ecj15submodular,qian2015subset} for Peptide Vaccine Design, briefly called PVD-EMO, which reformulates peptide vaccine design as a bi-objective optimization problem that maximizes the expected number of peptide-MHC bindings and minimizes the number of selected peptides simultaneously, and employs a Multi-Objective Evolutionary Algorithm (MOEA) to solve it. PVD-EMO can be equipped with any MOEA to solve this bi-objective problem, and we employ the theoretically grounded GSEMO~\cite{Laumanns04} as well as the popular NSGA-II~\cite{nsgaii}. To boost efficiency, we also incorporate warm-start and repair strategies into MOEAs. The warm-start strategy seeds an initial population of PVD-EMO with a range of solution sizes, including the output solution of the previous greedy algorithm Optivax-P~\cite{dai2023constrained}. Meanwhile, the repair strategy is designed to steer PVD-EMO towards feasible solutions when it encounters infeasible regions, thereby improving its exploratory potential. We prove that the warm-start strategy ensures that PVD-EMO maintains the same worst-case approximation guarantee as the previous greedy algorithm Optivax-P. Additionally, by using an illustrative example of peptide vaccine design, we prove that Optivax-P will get trapped in local optima, while PVD-EMO can help avoid them. Experiments on a peptide vaccine design for COVID-19, caused by the SARS-CoV-2 virus, demonstrate the superiority of PVD-EMO over the state-of-the art algorithm Optivax-P.

\section{Peptide Vaccine Design}
Let $V=\{v_1,v_2,\ldots,v_{n}\}$ denote the set of peptides, and $M$ denote the set of MHC genotypes observed in the population. Peptide vaccines work by selecting an effective subset of peptides displayed on the cell surface of MHC proteins. The resulting peptide-MHC complexes activate the cellular immune system. Let $p_{v,m}$ be the probability that a peptide $v\in V$ is displayed by an individual's MHC genotype $m\in M$, with the assumption that these probabilities are independent across different peptides. The occurrence of peptide $v$ being displayed is termed a peptide-MHC hit, represented by the indicator function $\mathbb{I}(p_{v,m})$, which equals 1 iff the event occurs.

Different MHC alleles have different peptide binding properties, so it is important for a vaccine to trigger multiple peptide-MHC bindings, ensuring redundancy in the activation of T cell clonotypes in an individual. Given a subset $S$ of selected peptides, for an MHC genotype $m\in M$, the number of peptides in $S$ that are displayed can be represented as $\sum_{v\in S}\mathbb{I}(p_{v,m})$. Since too much redundancy may lead to unnecessary burden on the immune system, a threshold parameter $N\ge 0$ is used to limit the additional benefits until a person attains the desired $N$ hits, i.e., $\min\{\sum_{v\in S}\mathbb{I}(p_{v,m}), N\}$. Thus, the expected number of peptide-MHC bindings for a subset $S$ across the whole MHC genotypes set $M$ is 
\begin{equation}\label{eq-f}
    f(S)=\sum_{m\in M}w(m)\cdot\mathbb{E}[\min\{\sum_{v\in S}\mathbb{I}(p_{v,m}), N\}],
\end{equation}
where $w(m)$ denotes the weight corresponding to the percentage of the population with the genotype $m$. 

To obtain $f(S)$, we calculate $\mathbb{E}[\min\{\sum_{v\in S}\mathbb{I}(p_{v,m}), N\}]$ for each $m\in M$, and then sum up these expectations across all $m$. Assume that the indicator variables $\mathbb{I}(p_{v,m})$ are independent for each MHC genotype $m\in M$. For a given MHC genotype $m$, let $S_i\subseteq S$ denote the subset consisting of the first $i$ peptides of $S$ (assume an arbitrary order of the peptides in $S$), and $Y_i=\sum_{v\in S_i}\mathbb{I}(p_{v,m})$ denote the sum of $|S_i|$ independent Bernoulli trials. The distribution of $Y_0$ is trivially $P(Y_0=0)=1$, and we can iteratively compute the distribution of $Y_i$ as $P(Y_{i}=y)=P(Y_{i-1}=y-1)\cdot p+P(Y_{i-1}=y)\cdot(1-p)$, where $p$ denotes the probability $p_{v,m}$ that the new peptide $v\in S_i\setminus S_{i-1}$ is displayed by the MHC genotype $m$. Then, the distribution of $Z=\min\{\sum_{v\in S}\mathbb{I}(p_{v,m}),N\}$ satisfies that $\forall z<N: P(Z=z)= P(Y_{|S|}=z)$, and $P(Z=N)=P(Y_{|S|}\ge N)$, where $Y_{|S|}$ is just $\sum_{v\in S}\mathbb{I}(p_{v,m})$. The expectation is then computed using the resulting distribution.

The objective function, given by Eq.~(\ref{eq-f}), is monotone and submodular~\cite{dai2023constrained}. Let $\mathbb{R}$ denote the set of reals. A set function $f: 2^V \rightarrow \mathbb{R}$ is monotone if $\forall X \subseteq Y \subseteq V$, $f(X) \leq f(Y)$. As more peptides will not worsen the value, the monotonicity is satisfied naturally. A set function $f$ is submodular~\cite{nemhauser1978analysis} if it satisfies the ``diminishing returns'' property, i.e., $\forall X\subseteq Y\subseteq V, v\notin Y$, \begin{align*}f(X\cup \{v\})-f(X)\ge f(Y\cup\{v\})-f(Y),\end{align*} 
which implies that the benefit of adding a peptide to a set will not increase as the set extends.

The peptide vaccine design problem is subject to two types of constraints. The first is a cardinality constraint that ensures the selected peptide subset $S$ not exceed a given size $k$, i.e., $|S|\leq k$, which is crucial for increasing the vaccine's stability, as well as reducing production costs. The second type involves pairwise constraints, which prevent any two peptides in the subset $S$ from being similar. Maintaining dissimilarity among peptides is crucial for vaccine design, as similar peptides might fail for the same reason, reducing the vaccine's overall efficacy. Let $G=(V,E)$ denote a graph, where the vertices correspond to the peptide set $V$, and the edges connect peptides that are deemed similar. A solution that satisfies the set of pairwise constraints corresponds to an independent set in the graph $G=(V,E)$; that is, for any two peptides $v_i$, $v_j$ in the subset $S\subseteq V$, there is no edge $(v_i,v_j)$ in $E$.

In~\cite{dai2023constrained}, peptide vaccine design has been formulated as the problem of maximizing the objective function $f(S)$, which characterizes the expected number of peptide-MHC bindings across a broad range of populations, subject to a cardinality constraint and a set of pairwise constraints, as presented in Definition~\ref{def-problem}.

\begin{definition}[Peptide Vaccine Design]\label{def-problem}
Given a set of peptides $V=\{v_1,v_2,\cdots,v_n\}$, a set of MHC genotypes $M$, probabilities $p_{v,m}$ of binding between each peptide $v$ and each MHC genotype $m$, a weight function $w(m)$ corresponding to the percentage of the population with each MHC genotype $m$, a threshold $N\ge 0$, a budget $k$, and a similarity graph $G=(V,E)$, the goal of peptide vaccine design is to find a subset $S\subseteq V$ of peptides that maximizes $f(S)$ subject to a size constraint and a set of pairwise constraints. That is:
\begin{align}\label{eq-problem}
&\mathop{\arg\max}\limits_{S\subseteq V} f(S)=\sum_{m\in M}w(m)\cdot\mathbb{E}[\min\{\sum\limits_{v\in S}\mathbb{I}(p_{v,m}),N\}]\nonumber\\ 
&\qquad \quad \text{s.t.}\quad |S|\leq k \enspace  \&\enspace   \forall v_i,v_j\in S, (v_i,v_j)\notin E.
\end{align}
\end{definition}

\subsection{Previous Algorithm}
For the peptide vaccine design problem in Definition~\ref{def-problem}, Dai and Gifford~\shortcite{dai2023constrained} introduced a greedy algorithm, Optivax-P, which starts from an empty set and iteratively adds peptides with the highest marginal gain on $f$, satisfying given constraints. Let $\Delta$ denote the maximum degree of the similarity graph $G$. When $\Delta=0$, implying no pairwise constraints, the problem simplifies to optimizing a monotone submodular function with a cardinality constraint, where Optivax-P achieves the optimal $(1-1/e)$ polynomial-time approximation ratio~\cite{nemhauser1978analysis}. For $\Delta>0$, indicating pairwise constraints, Optivax-P can find a solution $\hat{S}$ satisfying $f(\hat{S}) \geq \max\{\mathrm{OPT}/(1+\Delta),\mathrm{OPT}_2/2\}$, where $\mathrm{OPT}$ and $\mathrm{OPT}_2$ denote the optimal function values for the original and a more constrained problem, respectively. For the more constrained problem, $G$ is replaced by $G^2$, linking vertices within two steps in $G$.

%For the peptide vaccine design problem in Definition~\ref{def-problem}, Dai and Gifford~\shortcite{dai2023constrained} proposed a greedy algorithm named Optivax-P, which starts from the empty set, and iteratively adds one peptide with the largest marginal gain on $f$ while satisfying the given constraints. Let $\Delta$ denote the maximum degree of the similarity graph $G$. For the special case where $\Delta=0$, implying that there are no pairwise constraints, the problem reduces to optimizing a monotone submodular function under a cardinality constraint. In this scenario, Optivax-P achieves the optimal polynomial-time approximation ratio of $(1-1/e)$~\cite{nemhauser1978analysis}. When $\Delta>0$, indicating that pairwise constraints exist, Dai and Gifford~\shortcite{dai2023constrained} proved that Optivax-P can find a solution $\hat{S}$ satisfying $f(\hat{S}) \geq \max\{\mathrm{OPT}/(1+\Delta),\mathrm{OPT}_2/2\}$, where $\mathrm{OPT}$ denotes the optimal function value, and $\mathrm{OPT}_2$ represents the optimal function value of a more constrained optimization problem. In this more constrained problem, the similarity graph $G$ is replaced with its $2$-th power $G^2$, whose vertices are adjacent if their distance in $G$ is at most 2. 
%$f(\hat{S}) \geq \max\left\{\frac{\mathrm{OPT}}{1+\Delta},\frac{\mathrm{OPT}_2}{2}\right\}$

\section{PVD-EMO Framework}
Inspired by the excellent performance of MOEAs for solving general subset selection problems~\cite{ecj15submodular,qian2015subset,QianSYT17,QianS0TZ17,QianZT018,QianYTYZ19,0001BF20,Qian20,0001Q0021,qian2021multiobjective,0002Z022,QianLZ22,RoostapourNNF22,QianLFT23,zhang2023sparsity}, we propose a new Peptide Vaccine Design framework based on Evolutionary Multi-objective Optimization, called PVD-EMO. A subset $S$ of $V$ can be naturally represented by a Boolean vector $\bm{s} \in\{0,1\}^n$, where the $i$-th bit $s_{i}=1$ iff the $i$-th peptide in $V$ is contained by $S$. We will not distinguish $\bm{s}\in\{0,1\}^n$ and its corresponding subset $\{v_{i}\in V \mid s_{i}=1\}$ for notational convenience.

As presented in Algorithm~\ref{alg:PVD-EMO}, PVD-EMO first reformulates the original peptide vaccine design problem in Definition~\ref{def-problem} as a bi-objective maximization problem 
\begin{align}\label{eq-bi-objective}
\arg\max\nolimits_{\bm{s} \in \{0,1\}^n} (f_1(\bm{s}),f_2(\bm{s})),
\end{align}
\begin{align*}
\text{where\qquad} f_1(\bm{s}) = \begin{cases}
	f(\bm{s}), &{\bm{s}} \text{ is feasible}\\
	-1, &{\text{otherwise}}
\end{cases},\quad
f_{2}(\bm{s})=-|\bm{s}|.
\end{align*}
That is, the first objective $f_1$ equals the original objective $f$ (i.e., the expected number of peptide-MHC bindings across a broad range of populations) for feasible solutions satisfying the cardinality and pairwise constraints, while $-1$ for infeasible ones; the second objective $f_{2}(\bm{s})=-|\bm{s}|=-\sum_{i=1}^{n}s_i$ is the opposite of the subset size. The domination relationship in Definition~\ref{def_Domination} is used to compare solutions. A solution is Pareto optimal if no other solution dominates it. The collection of objective vectors of all Pareto optimal solutions is called the Pareto front.

\begin{definition}[Domination]\label{def_Domination}
For two solutions $\bm s$ and $\bm{s}'$,\\
1. $\bm{s}$ \emph{weakly dominates} $\bm{s}'$ (i.e., $\bm{s}$ is \emph{better} than $\bm{s}'$, denoted by $\bm{s} \succeq \bm{s}'$) if \;$\forall i: f_i(\bm{s}) \geq f_i(\bm{s}')$;\\
2. ${\bm{s}}$ \emph{dominates} $\bm{s}'$ (i.e., $\bm{s}$ is \emph{strictly better} than $\bm{s}'$, denoted by $\bm{s} \succ \bm{s}'$) if ${\bm{s}} \succeq \bm{s}' \wedge \exists i: f_i(\bm{s}) > f_i(\bm{s}')$;\\
3. $\bm{s}$ and $\bm{s}'$ are \emph{incomparable} if neither $\bm{s} \succeq \bm{s}'$ nor $\bm{s}' \succeq \bm{s}$.
\end{definition}

\begin{algorithm}[t!]\caption{PVD-EMO Framework}\label{alg:PVD-EMO}
\textbf{Input}: a peptide vaccine design problem instance, a budget $k$ and a similarity graph $G=(V,E)$\\
\textbf{Output}: a subset of $V$ with size at most $k$\\
\textbf{Process}:
    \begin{algorithmic}[1]
    \STATE Construct two objective functions $f_1(\bm{s})$ and $f_2(\bm{s})$ to be maximized, as presented in Eq.~(\ref{eq-bi-objective});
    \STATE Apply an MOEA to solve the bi-objective problem;
    \STATE \textbf{return} the best feasible solution in the final population generated by the MOEA
    \end{algorithmic}
\end{algorithm}

After constructing the bi-objective problem in Eq.~(\ref{eq-bi-objective}), PVD-EMO employs an MOEA to solve it, as shown in line~2 of Algorithm~\ref{alg:PVD-EMO}. Evolutionary algorithms (EAs), inspired by Darwin’s theory of evolution, are general-purpose randomized heuristic optimization algorithms~\cite{back:96}, mimicking variational reproduction and natural selection. Starting from an initial population of solutions, EAs iteratively reproduce offspring solutions by crossover and mutation, and select better ones from the parent and offspring solutions to form the next population. The population-based search of EAs matches the requirement of multi-objective optimization, i.e., EAs can generate a set of Pareto optimal solutions by running only once. Thus, EAs have become the most popular tool for multi-objective optimization~\cite{coello2007evolutionary,hong2021evolutionary,yang2024reducing,liang2024evolutionary}, and the corresponding algorithms are also called MOEAs. After running a number of iterations, the best feasible solution (i.e., the solution having the largest $f$ value while satisfying the constraints) will be selected from the final population as the output, as shown in line~3 of Algorithm~\ref{alg:PVD-EMO}. Note that the aim of PVD-EMO is to find a good solution of the original peptide vaccine design problem in Definition~\ref{def-problem}, rather than the Pareto front of the reformulated bi-objective problem in Eq.~(\ref{eq-bi-objective}). That is, the bi-objective reformulation is an intermediate process. The introduction of the second objective $f_2$ can naturally bring a diverse population, which may lead to better optimization performance. 

PVD-EMO can be equipped with any MOEA, and we employ the theoretically grounded GSEMO~\cite{Laumanns04} as well as the popular NSGA-II~\cite{nsgaii}. To boost efficiency, we incorporate warm-start and repair strategies into MOEAs, and also design a strategy to accelerate the objective evaluation. Next, we will introduce them in detail.

\subsection{PVD-GSEMO-WR Algorithm}

PVD-EMO which employs the GSEMO algorithm and uses both Warm-start and Repair strategies is called PVD-GSEMO-WR, as presented in Algorithm~\ref{alg:PVD-GSEMO}. PVD-GSEMO-WR first constructs the bi-objective problem in Eq.~(\ref{eq-bi-objective}) in line~1 of Algorithm~\ref{alg:PVD-GSEMO}. After that, it starts with an initial population created by the warm-start strategy in line~2, and iteratively improves the quality of solutions in the population $P$ (lines~3–10). The warm-start strategy presented in Algorithm~\ref{alg-warm-start} randomly generates a feasible solution for each size in $\{0, 1, \ldots, k-1\}$, while for the solution of size $k$, it uses the output from the previous greedy algorithm, Optivax-P~\cite{dai2023constrained}. Among these solutions, the non-dominated ones will be included in the initial population $P$. In each iteration, a parent solution $\bm{s}$ is selected from the current population $P$ uniformly at random (line~4), and used to generate an offspring solution $\bm{s}'$ by bit-wise mutation (line~5), which flips each bit of $\bm{s}$ independently with probability $1/n$. Then the offspring solution $\bm{s}'$ will go through a repair strategy (line~6), as presented in Algorithm~\ref{alg-repair}, which can fix an offspring solution $\bm{s}'$ such that it no longer violates the pairwise constraints, and will be introduced later. The repaired offspring solution $\bm{s}''$ equals to $\bm{s}'$ if $\bm{s}'$ has already been feasible. Then, $\bm{s}''$ is used to update the population $P$ (lines 7–9). If $\bm{s}''$ is not dominated by any solution in $P$ (line~7), it will be added into $P$, and meanwhile, those solutions weakly dominated by $\bm{s}''$ will be deleted (line~8). This updating procedure makes the population $P$ always contain incomparable solutions. Furthermore, $P$ always contains feasible solutions, because 1) the repair strategy can fix the violation on the pairwise constraints; 2) a solution violating the size constraint has bad values on both objectives according to Eq.~(\ref{eq-bi-objective}), and will not be included into the population.

\begin{algorithm}[t!]\caption{PVD-GSEMO-WR Algorithm}\label{alg:PVD-GSEMO}
\textbf{Input}: a peptide vaccine design problem instance, a budget $k$ and a similarity graph $G=(V,E)$\\
\textbf{Output}: a subset of $V$ with size at most $k$\\
\textbf{Process}:
    \begin{algorithmic}[1]
     \STATE Construct two objective functions $f_1(\bm{s})$ and $f_2(\bm{s})$ to be maximized, as presented in Eq.~(\ref{eq-bi-objective});
    \STATE Initialize the population $P$ by Warm-Start Strategy;
    \REPEAT
    \STATE  Choose $\bm s$ from $P$ uniformly at random;
    \STATE  Create $\bm{s}'$ by flipping each bit of $\bm s$ with prob. $1/n$;
    \STATE  $\bm{s}''\gets$ Repair Strategy~($\bm{s}$, $\bm{s}'$, $G$);
    \IF{$\nexists \bm z \in P$ such that $\bm z \succ \bm {s}''$} \STATE $P \gets (P \setminus \{\bm z \in P \mid \bm {s}'' \succeq \bm z\}) \cup \{\bm {s}''\}$
    \ENDIF
    \UNTIL{some criterion is met} 
    \RETURN the best feasible solution in $P$
    \end{algorithmic}
\end{algorithm}

In Algorithm~\ref{alg-repair}, the repair strategy reviews each bit of parent solution $\bm{p}$ and offspring solution $\bm{o}$ (lines~1-7). When a bit $i$ flips from 0 in $\bm{p}$ to 1 in $\bm{o}$ (line~2), it identifies a set $Q$ of indices in $\bm{o}$ that are connected to peptide $v_i$ in edge set $E$ (line~3). The algorithm then randomly keeps one index $q$ from $Q$ unchanged (line~4) and sets the other connected bits in $Q \setminus {q}$ to 0 (line~5). This ensures that no pairwise constraints are violated in the first $i$ bits of $\bm{o}$. After processing all bits, the repaired $\bm{o}$, free from pairwise constraint violations, is returned (line~8).

%Given a parent solution $\bm{p}$ and an offspring solution $\bm{o}$, the repair strategy in Algorithm~\ref{alg-repair} iteratively checks each bit (lines~1-7): for the newly flipped bit $i$ where $o_i=1$ and $p_i=0$ in line~2, it collects the set $Q$, which includes all indices of peptides in the current offspring solution $\bm{o}$ that share edges with the peptide $v_i$ in the edge set $E$ (line~3). Next, the strategy randomly selects an index $q$ from $Q$ to keep the corresponding bit $o_q$ unchanged (line~4), and sets the other bits $o_p$, where $p \in Q \setminus \{q\}$, to 0 (line~5). After the $i$-th iteration, the first $i$ bits of the current offspring solution $\bm{o}$ are guaranteed not to violate the pairwise constraints. Consequently, when the entire loop ends, the repaired offspring solution will be free of any pairwise constraint violations, which will be returned in line~8.

\begin{algorithm}[t!]
\caption{Warm-Start Strategy}\label{alg-warm-start}
\textbf{Input}: solution $\bm{s}_g$ output by the greedy algorithm Optivax-P\\
\textbf{Process}:
\begin{algorithmic}[1]
\STATE $P=\{\bm{s}_g\}$;
\FOR{$i=0$ to $k-1$}
\STATE Randomly create a feasible solution $\bm{s}$ of size $i$;
\IF{$\nexists \bm z \in P$ such that $\bm z \succ \bm {s}$} \STATE $P \gets (P \setminus \{\bm z \in P \mid \bm {s} \succeq \bm z\}) \cup \{\bm {s}\}$
\ENDIF
\ENDFOR
\RETURN an initial population $P$
\end{algorithmic}
\end{algorithm}

\begin{algorithm}[t!]
\caption{Repair Strategy}\label{alg-repair}
\textbf{Input}: a similarity graph $G=(V,E)$, a parent solution $\bm{p}$ and an offspring solution $\bm{o}$\\
\textbf{Process}:
\begin{algorithmic}[1]
\FOR{$i=1$ to $n$}
\IF{$o_{i}=1$ and $p_{i}=0$}
\STATE $Q=\{j~|~(v_{i},v_{j})\in E ~\&~ o_{j}=1\}\cup \{i\}$;
\STATE Choose $q$ from $Q$ uniformly at random;
\STATE Set $o_p$ to $0$, for any $p \in Q\setminus \{q\}$
\ENDIF
\ENDFOR
\RETURN the offspring solution $\bm{o}$
\end{algorithmic}
\end{algorithm}

\subsection{PVD-NSGA-II-WR Algorithm}

NSGA-II is a popular MOEA which incorporates two substantial features, i.e., non-dominated sorting and crowding distance. For a detailed description of NSGA-II, please refer to~\cite{nsgaii}. PVD-EMO which employs NSGA-II for multi-objective optimization and incorporates both Warm-start and Repair strategies is called PVD-NSGA-II-WR.

In Section~\ref{theory}, we will prove that PVD-GSEMO-WR and PVD-NSGA-II-WR achieve the same theoretical guarantee as Optivax-P~\cite{dai2023constrained} while better avoiding local optima. In Section~\ref{experiments}, we will show their superior performance over Optivax-P in real-world experiments.

%In Section~\ref{theory} , we will prove that both PVD-GSEMO-WR and PVD-NSGA-II-WR can achieve a good worst-case theoretical guarantee as the state-of-art greedy algorithm Optivax-P~\cite{dai2023constrained}, while also having a better ability to avoid local optima. In Section~\ref{experiments}, we will show that they can perform better than Optivax-P in the real-world experiments of peptide vaccine design. 

\subsection{Acceleration of Objective Evaluation}
%In each iteration of PVD-EMO, evaluating the objective value $f(S'')$ of a new solution $\bm{s}''$ requires calculating the expected value of $\min\{\sum_{v\in S''}\mathbb{I}(p_{v,m}), N\}$ for each $m \in M$. Given $S''$ as a subset of $\bm{s}''$, $\sum_{v\in S''}\mathbb{I}(p_{v,m})$ is a sum of at most $k$ independent Bernoulli trials. Computing the probability distribution of this sum, $Y$, via iterated convolutions takes $O(k^2)$ time. Calculating $\mathbb{E}[\min\{Y, N\}]$ involves summing $\min\{y, N\}$ over all $y$, which costs $O(k^3)$ time. Consequently, the total time for computing $f(S'')$ is $O(|M| k^3)$, indicating significant computational expense.

In each iteration of PVD-EMO, we need to evaluate the objective value of a newly generated solution $\bm{s}''$, i.e.,  $f(S'')=\sum_{m\in M}w(m)\cdot\mathbb{E}[\min\{\sum_{v\in S''}\mathbb{I}(p_{v,m}), N\}]$, where $S''$ is the corresponding subset of $\bm{s}''$. For each $m\in M$, a random variable $Y=\sum_{v\in S''}\mathbb{I}(p_{v,m})$ is the sum of $|S''|\leq k$ independent Bernoulli trials $\mathbb{I}(p_{v,m})$, and we can get the probability distribution of $Y$ by iterated convolutions, which costs $O(k^2)$ time. $\mathbb{E}[\min\{\sum_{v\in S''}\mathbb{I}(p_{v,m}), N\}]$ then can be calculated by using $\mathbb{E}[h(y)]=\sum_{y}h(y) \cdot P(Y=y)$, where $h(y)=\min\{y,N\}$, which requires $O(k^3)$ time. Thus, the total time of computing $f(S'')$ is $O(|M| k^3)$, which is expensive.

Next, we give an acceleration strategy of evaluating $f(S'')$. For each $m\in M$, let $D^{X}$ denote the probability distribution of a random variable $\sum_{v\in X}\mathbb{I}(p_{v,m})$, and $D_j^{X}$ denote the probability of attaining $j$ hits from a peptide subset $X$, i.e., $D_j^{X}=P(D^{X}=j)$. The probability distribution $D^{\emptyset}$ of an empty subset satisfies that $D^{\emptyset}_0\!=\!1$ and $D^{\emptyset}_j\!=\!0$ for all $0<j\leq k$. To accelerate the evaluation process, we build a recursive relation between the distributions $D^{S''}$ and $D^{S}$, where $S$ is a parent solution of $S''$. Specifically, we can compute $D^{S''}$ based on $D^{S}$ by recursively adding peptides from $X^{+}=\{v_i~|~v_i\notin S \wedge v_i\in S''\}$ and then recursively deleting peptides from $X^{-}=\{v_i~|~v_i\in S \wedge v_i\notin S''\}$. The distribution $D^{X\cup\{v_i\}}$ is updated based on $D^{X}$:
\begin{align*}
D_j^{X\cup\{v_i\}}= \begin{cases}
	(1-p_{v_i,m}) D_j^{X},\qquad\qquad\qquad\qquad j=0, \\
	(1-p_{v_i,m}) D_j^{X}\!+\!p_{v_i,m} \cdot D_{j-1}^{X},~0<\!j\!\leq \!|X|,\\
        p_{v_i,m} \cdot D_{j-1}^{X}, \qquad\qquad\qquad\quad j=|X|+1,\\
        \qquad 0, \qquad\qquad\qquad\quad\quad |X|+1<j\leq k.\\
\end{cases}
\end{align*}
Similarly, the probability distribution $D^{X\setminus \{v_i\}}$ is updated by backtracking from $D^{X}$:
\begin{align*}
D_j^{X\setminus\{v_i\}}= \begin{cases}
	D_j^{X}/(1-p_{v_i,m}) ,\qquad\quad\quad\quad j=0, \\
	\dfrac{D_j^{X}-p_{v_i,m} \cdot D_{j-1}^{X\setminus\{v_i\}}}{1-p_{v_i,m}} , \quad\quad 0<j< |X|,\\
        \qquad 0, \qquad\qquad\qquad\quad\quad\quad |X|\leq j\leq k.\\
\end{cases}
\end{align*}
Note that the commonly used bit-wise mutation operator in line~5 of Algorithm~\ref{alg:PVD-GSEMO} will flip one bit in expectation, that is, it will add or delete one peptide in expectation. Consequently, the expected time complexity for calculating the probability distribution $D^{S''}$ of an offspring solution $S''$ is $O(k)$, which is faster than the $O(k^2)$ time required by direct computation. We store the utility function $h(y) = \min\{y, N\}$ in a data structure that supports random access, such as an array, which allows us to compute its expectation in $O(k)$ time. By implementing these optimizations, the overall time required to compute the objective function can be reduced from $O(|M|k^3)$ to $O(|M|k)$.

%It begins with a randomly generated population of feasible solutions, which includes the output solution obtained from the previous greedy algorithm, Optivax-P; selects two parents by binary tournament selection, employs crossover and mutation to generate offspring solutions, repairs offspring solutions that violate pairwise constraints, and updates the population based on non-dominated sorting and crowding distance. 

\section{Theoretical Analysis}\label{theory}

For peptide vaccine design in Definition~\ref{def-problem}, the objective function $f(\bm s)$ has been proved to be monotone and  submodular~\cite{dai2023constrained}. The greedy algorithm Optivax-P can provide an approximation guarantee of $\max\{\mathrm{OPT}/(1+\Delta),\mathrm{OPT}_2/2\}$, where $\Delta> 0$, $\mathrm{OPT}$ denotes the optimal function value of Eq.~(\ref{eq-problem}), and $\mathrm{OPT}_2$ is the optimal function value under additional constraints. By maximizing $f_1(\bm{s})=f(\bm s)$ and $f_2(\bm{s})=-|\bm{s}|$ simultaneously, PVD-GSEMO-WR or PVD-NSGA-II-WR can achieve the same theoretical guarantee as Optivax-P, as shown in Theorem~\ref{theo-gurantee}. 

\begin{theorem}\label{theo-gurantee}
For peptide vaccine design in Definition~\ref{def-problem}, PVD-GSEMO-WR, or PVD-NSGA-II-WR with a population size of at least $4(k+1)$, can achieve the same approximation guarantee as the previous greedy algorithm Optivax-P.
\end{theorem}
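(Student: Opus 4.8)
The plan is to show that the approximation quality of the Optivax-P output is never lost by the evolutionary process, so the returned solution is at least as good. Let $\bm{s}_g$ denote the solution produced by Optivax-P; by the guarantee cited earlier it is feasible and satisfies $f(\bm{s}_g) \geq \max\{\mathrm{OPT}/(1+\Delta), \mathrm{OPT}_2/2\}$. Since the warm-start strategy in Algorithm~\ref{alg-warm-start} inserts $\bm{s}_g$ into the initial population, it suffices to prove the invariant that the population $P$ always contains a feasible solution $\bm{s}$ with $f_1(\bm{s}) \geq f(\bm{s}_g)$; the best feasible solution returned then inherits this bound. Note first that the invariant holds right after warm-start: even if a randomly generated seed of smaller size removes $\bm{s}_g$, the removing solution weakly dominates $\bm{s}_g$ and hence already has $f_1 \geq f(\bm{s}_g)$.

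For PVD-GSEMO-WR I would establish the invariant by induction over the iterations. The update in lines~7--9 of Algorithm~\ref{alg:PVD-GSEMO} removes a solution only when a newly produced $\bm{s}''$ is added that weakly dominates it. Let $\bm{s}$ be a witness of the invariant before an iteration. If $\bm{s}$ survives, we are done; if it is deleted, the solution $\bm{s}''$ that caused the deletion satisfies $\bm{s}'' \succeq \bm{s}$, so $f_1(\bm{s}'') \geq f_1(\bm{s}) \geq f(\bm{s}_g) \geq 0$, which — since infeasible solutions take the value $-1$ — also certifies that $\bm{s}''$ is feasible. Hence a witness persists at every step and, in particular, in the final population. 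Crucially, this argument depends only on the archive-style selection rule and not on how offspring are generated, so bit-wise mutation and the repair strategy do not affect it.

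For PVD-NSGA-II-WR the same target invariant is used, but survival is governed by non-dominated sorting and crowding distance on a fixed-size population, so a witness could in principle be cut even when it lies on the first front. The key structural observation is that $f_2(\bm{s}) = -|\bm{s}|$ takes only the $k+1$ values $0, -1, \ldots, -k$ among feasible solutions, so the Pareto front contains at most $k+1$ distinct objective vectors. I would then invoke the preservation property of NSGA-II: when the population size is at least four times the number of distinct front vectors, i.e.\ at least $4(k+1)$, the best attained $f_1$ value at each $f_2$ level is never lost, because the boundary solutions receive infinite crowding distance and the remaining capacity provably suffices to retain one representative per non-dominated value. Applying this at the $f_2$ level of the current witness keeps a solution with $f_1 \geq f(\bm{s}_g)$ in the population throughout.

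The main obstacle is the NSGA-II step. For GSEMO the archive update makes the invariant immediate, but for NSGA-II one must rule out that crowding-distance tie-breaking discards the witness. The crux is to establish (or carefully invoke) that $\mu \geq 4(k+1)$ forces NSGA-II to keep, at each non-dominated $f_2$ value, at least one solution whose best $f_1$ is non-decreasing across generations; quantifying how the crowding-distance computation distributes the population across the at most $k+1$ objective columns, and verifying that the witness is never squeezed out, is where the real work lies.
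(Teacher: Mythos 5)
Your proposal is correct and follows essentially the same route as the paper: warm-start plants the Optivax-P solution $\bm{s}_g$ in the initial population, the GSEMO archive rule (a solution is deleted only if weakly dominated by the newcomer, whose $f_1 \geq f(\bm{s}_g) \geq 0$ also certifies feasibility) preserves a witness forever, and for NSGA-II the paper likewise just invokes the known preservation result of Zheng and Doerr that a population of size at least four times the Pareto-front size (here at most $k+1$, since $f_2$ takes values in $\{0,-1,\ldots,-k\}$) retains one solution per objective vector of the first front. The crowding-distance verification you flag as ``the real work'' is exactly what the paper delegates to that cited result rather than reproving, so nothing further is needed.
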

%with a population size is larger than $4(k+1)$ enough, namely at least four times the size of the Pareto front,

The proof is straightforward, because PVD-GSEMO-WR and PVD-NSGA-II-WR begin with an initial population containing the greedy solution output by Optivax-P, due to the warm-start process. The population update mechanism of PVD-GSEMO-WR ensures that if a solution is deleted, there must be another solution in the population that weakly dominates the deleted one. According to the recent theoretical work~\cite{zheng2022first}, if the population size is at least four times of the size of the Pareto front (which is no larger than $k+1$ here), PVD-NSGA-II-WR will keep at least one solution for each objective vector in the first non-dominated front of the population. Consequently, PVD-GSEMO-WR and PVD-NSGA-II-WR are guaranteed to obtain a solution that is at least as good as the one obtained by Optivax-P.

%By using an illustrative example of peptide vaccine design in Definition~\ref{def-example}, we prove in Theorem~\ref{theo-example} that Optivax-P will get trapped in local optima, while PVD-GSEMO-WR can avoid them and find the global optimum. The global optimal solution of this example does not contain the peptide $v_1$. The proof idea is that Optivax-P will first select $v_1$ due to the greedy nature and will be misled by it, while PVD-GSEMO-WR can avoid $v_1$ by bit-wise mutation or escape from a local optimal solution that includes $v_1$ by using the repair strategy. After excluding the peptide $v_1$, PVD-GSEMO-WR can efficiently follow a path provided by the population to reach a global optimal solution.

% Using a peptide vaccine design example from Definition~\ref{def-example}, Theorem~\ref{theo-example} shows that Optivax-P tends to get stuck in local optima, whereas PVD-GSEMO-WR can avoid them and find the global optimum. The global optimal solution of this example does not contain the peptide $v_1$. The proof idea is that Optivax-P selects $v_1$ first due to its greedy nature and gets misled, while PVD-GSEMO-WR avoids $v_1$ through bit-wise mutation or escapes local optima by using the repair strategy, and then follows a path provided by the population to reach a global optimal solution.

Using a peptide vaccine design example from Definition~\ref{def-example}, Theorem~\ref{theo-example} shows that Optivax-P falls into local optima, while PVD-GSEMO-WR can avoid them and achieve the global optimum, which excludes the peptide $v_1$. The proof shows that due to its greedy nature, Optivax-P first selects $v_1$ and is misled. In contrast, PVD-GSEMO-WR either avoids $v_1$ via bit-wise mutation or escapes local optima using a repair strategy, finally following the population's path to the global optimum.

\begin{definition}[An Example of Peptide Vaccine Design]\label{def-example}
Let $V=\{v_1,v_2,\cdots,v_n\}$ and  $2\leq k\leq n$. The similarity graph $G(V,E)$ is shown in Figure~\ref{fig-example} where the two nodes connected by each edge are considered similar. Each node in the graph has a degree of at most 1, i.e., it can form only one edge, except for $v_1$ which has a degree of 2. The objective function $f$ has properties as follows: 
\begin{itemize}
    \item  For any feasible solution $X\subseteq V$ with $|X|\leq k-1$, and $\forall v_x\in V\setminus X $ such that $\forall v_y\in X, (v_x,v_y)\notin E$, it holds
    %and $v_y\in X$ such that $(v_x,v_y)\notin E$, it holds 
    \begin{equation}\label{eq-property-1}
        f(X\cup \{v_x\})=f(X)+f(\{v_x\});
    \end{equation}
    \item For any $v_x,v_y\in V\setminus\{v_1,v_2,v_3\}$, $v \in \{v_2,v_3\}$, it holds 
    \begin{align}\label{eq-property-2}
        f(\{v_x\})=f(\{v_y\})<f(\{v\}) <f(\{v_1\});
    \end{align}
    \item For any $v_x\in V\setminus\{v_1\}$ such that $(v_1,v_x)\notin E$, it holds 
    \begin{equation}\label{eq-property-3}
        f(\{v_1,v_x\})<f(\{v_2,v_3\}).
    \end{equation}
\end{itemize}
\end{definition}

\begin{figure}
    \centering
    \includegraphics[width=0.75\linewidth]{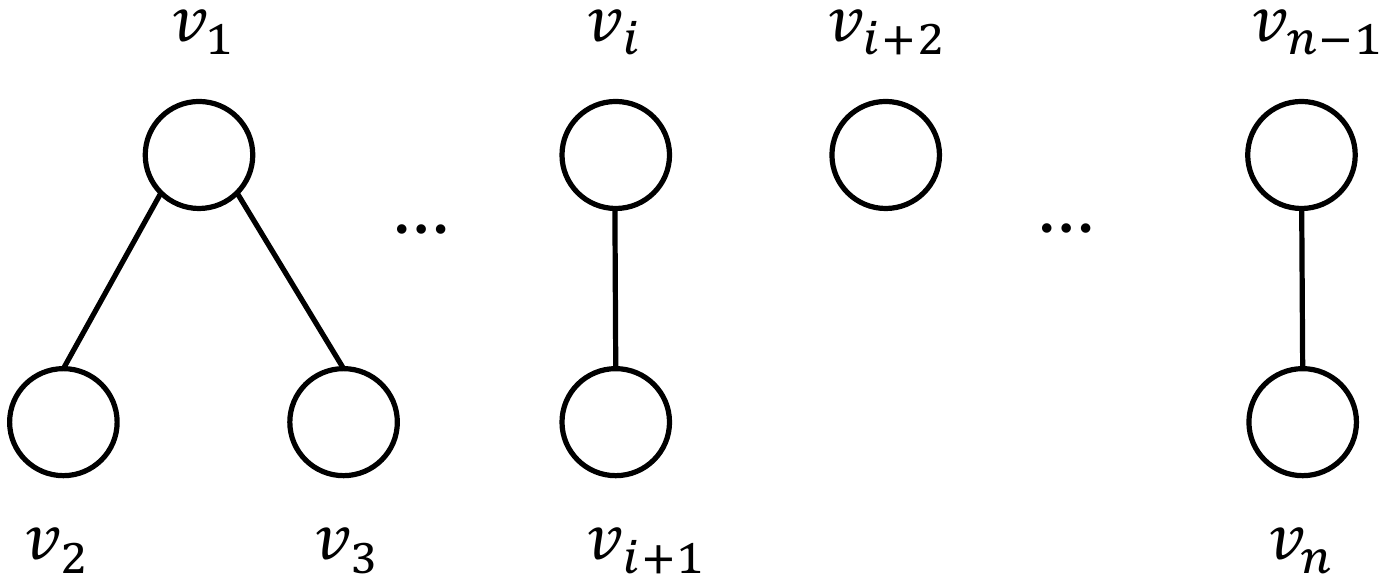}
    \caption{The similarity graph $G=(V,E)$ of an example of peptide vaccine design, where the vertices correspond to the peptides, and edges exist between the peptides that are deemed similar.}
    \label{fig-example}
\end{figure}

\begin{theorem}\label{theo-example}
For the peptide vaccine design example given in Definition~\ref{def-example}, PVD-GSEMO-WR can find an optimal solution within an expected number of iterations $O(kn^2)$, while the greedy algorithm Optivax-P cannot.
\end{theorem}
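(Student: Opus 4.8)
The plan is to first pin down the global optimum and the Pareto front of the bi-objective reformulation in Eq.~(\ref{eq-bi-objective}), and then measure both algorithms against that landscape.

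\textbf{Step 1 (structure of $f$ and the optimum).} From the additivity in Eq.~(\ref{eq-property-1}), any feasible set built by inserting pairwise-compatible peptides one at a time satisfies $f(X)=\sum_{v\in X}f(\{v\})$, so $f$ is modular on independent sets. Rewriting Eq.~(\ref{eq-property-3}) through this modularity gives $f(\{v_1\})+f(\{v_x\})<f(\{v_2\})+f(\{v_3\})$, and combining it with the singleton ordering of Eq.~(\ref{eq-property-2}), I would show that for every size $i\ge 2$ the best feasible solution excludes $v_1$ and has the form $\{v_2,v_3\}$ together with $i-2$ low-value peptides. In particular the global optimum is $P^{*}=\{v_2,v_3\}\cup L$ with $|L|=k-2$, and the Pareto front is $\emptyset$ at size $0$, $\{v_1\}$ at size $1$, and $\{v_2,v_3\}\cup L_{i-2}$ at each size $2\le i\le k$.

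\textbf{Step 2 (Optivax-P is trapped).} Since the marginal gain of a compatible peptide equals its singleton value by Eq.~(\ref{eq-property-1}), the greedy algorithm first selects the strict maximiser $v_1$ of Eq.~(\ref{eq-property-2}). As $v_1$ has degree $2$ and is adjacent to both $v_2$ and $v_3$, this permanently excludes $v_2$ and $v_3$, so Optivax-P terminates at $\{v_1\}$ plus $k-1$ low-value peptides, whose value is strictly below $f(P^{*})$ by the inequality above; hence it cannot reach the optimum.

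\textbf{Step 3 (PVD-GSEMO-WR escapes and climbs).} I would first bound the population size by $k+1$: the repair strategy preserves feasibility, infeasible or oversized offspring are dominated by $\emptyset$, and the domination rule keeps at most one solution per size, so any parent is chosen with probability at least $1/(k+1)$. Phase~1 reaches $\{v_2,v_3\}$: the empty set is never dominated and is always present, and flipping exactly bits $2$ and $3$ yields $\{v_2,v_3\}$ with probability $\Theta(1/n^2)$; alternatively, from $\{v_1\}$ the same two-bit flip followed by the repair of Algorithm~\ref{alg-repair} (which at index $2$ keeps $v_2$ and deletes $v_1$, after which $v_3$ has no surviving neighbour) also produces $\{v_2,v_3\}$. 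Either way a single step succeeds with probability $\Omega(1/(kn^2))$, giving expected time $O(kn^2)$, and since $\{v_2,v_3\}$ is Pareto-optimal it then persists. Phase~2 climbs the front: from the current size-$i$ optimum, a single bit flip adding any compatible low-value peptide (one exists while $i<k$ because $P^{*}$ does) advances to the size-$(i+1)$ optimum with probability $\Omega(1/(kn))$; the $k-2$ such steps cost $O(k^2n)$ in expectation. As $k\le n$, the total is $O(kn^2)$ and the final population contains $P^{*}$.

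\textbf{Main obstacle.} The delicate point is Phase~1, which is precisely the local-optimum trap: the natural incremental route through the singleton $\{v_2\}$ is blocked, because $\{v_1\}$ weakly dominates every size-$1$ solution and would immediately eject $\{v_2\}$ from the population. Escaping therefore requires either the simultaneous two-bit mutation out of $\emptyset$ or a careful tracing of the repair procedure on $\{v_1\}$, and the real work is verifying that this repair removes $v_1$ while retaining both $v_2$ and $v_3$ with constant probability, so that the $\Theta(1/n^2)$ mutation probability is not further diluted. Once $\{v_2,v_3\}$ is secured, the remaining analysis is the routine modular climb of Phase~2.
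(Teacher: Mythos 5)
Your proposal is correct and follows essentially the same route as the paper's proof: the same modularity-based characterization of the optimum $\{v_2,v_3\}\cup X^{k-2}$, the same greedy trap via $v_1$, the population bound $|P|\le k+1$, the two-bit mutation event (with the repair step deleting $v_1$ with probability $1/2$ when the parent contains $v_1$) to secure $\{v_2,v_3\}$ in expected $O(kn^2)$ iterations, and the single-bit climb $O^{i}\rightarrow O^{i+1}$ costing $O(k^2 n)=O(kn^2)$. The only cosmetic difference is that you anchor the escape phase at $\emptyset$ and $\{v_1\}$ (both guaranteed reachable since $\emptyset$ persists in the population), whereas the paper argues from an arbitrary feasible $X^i$ with $\{v_2,v_3\}\nsubseteq X^i$; both arguments are sound and give the same bound, and your lightly justified claim that every best size-$i$ solution extends to a best size-$(i+1)$ solution is asserted with the same level of detail in the paper itself.
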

\begin{proof}
We first analyze the property of the optimal solutions, and then show that Optivax-P cannot find an optimal solution. Finally, we derive the expected number of iterations required for PVD-GSEMO-WR to find an optimal solution.

Let $X^i\subseteq V$ denote a feasible solution with $|X^i|=i$ and the set $\{v_2,v_3\}\nsubseteq X^i$. For any $X^2$ and $v_x\in V$ such that $(v_1,v_x)\notin E$, it holds that 
\begin{align}\label{eq-6}
    f(X^2)&=\sum\nolimits_{v\in X^2}f(\{v\})\leq f(\{v_1\})+f(\{v_x\})\nonumber\\
    &=f(\{v_1,v_x\})<f(\{v_2,v_3\}),
\end{align}
where the equalities hold by applying Eq.~(\ref{eq-property-1}), the first inequality holds by Eq.~(\ref{eq-property-2}), that is, $\forall v\in V, f(\{v\})\leq f(\{v_1\})$, and the last inequality is by Eq.~(\ref{eq-property-3}). This implies that the solution $\{v_2,v_3\}$ is the unique optimal solution for $k=2$. When $2<k<n$, we can similarly derive that for any $X^k=X^2\cup X^{k-2}$, it holds that
\begin{equation}\label{eq-8}
    f(X^k)\!=\!f(X^2)\!+\!f(X^{k-2})<f(\{v_2,v_3\})\!+\!f(X^{k-2}),
\end{equation}
where the equality holds by Eq.~(\ref{eq-property-1}), and the inequality holds by Eq.~(\ref{eq-6}). Here $X^{k-2}$ is assumed to not contain the peptides $v_1$, $v_2$ and $v_3$. If $v_1\in X^{k-2}$, we can exchange $v_1$ with a peptide $v$ from $X^2$ to ensure $v_1\notin X^{k-2}$. The same exchange process can be applied if $v_2\in X^{k-2}$ or $v_3\in X^{k-2}$. Note that by the definition of $X^i$, we know that $X^k$ can contain at most one of the three peptides $v_1$, $v_2$ and $v_3$; otherwise, $X^k$ will violate the pairwise constraints in Figure~\ref{fig-example} or the condition $\{v_2,v_3\}\nsubseteq X^i$. Eq.~(\ref{eq-property-2}) states that for any $v_x,v_y\in V\setminus\{v_1,v_2,v_3\}$, $f(\{v_x\})=f(\{v_y\})$; thus, $f(X^{k-2})$ has a constant value, denoted as $C\ge 0$. Combining Eq.~(\ref{eq-8}), we obtain that $f(X^k)< f(\{v_2,v_3\})+C$. Thus, we can find that a feasible solution $O$ is optimal iff $O=\{v_2,v_3\}\cup X^{k-2}$. 

The greedy algorithm Optivax-P first selects $v_1$, which has the largest marginal gain. Next, Optivax-P deletes all peptides from the set $V$ that are connected to $v_1$, including $v_2$ and $v_3$. Thus, the optimal solution cannot be found.

For the PVD-GSEMO-WR algorithm, the problem is implemented as maximizing $f(X)$ and minimizing $|X|$ simultaneously. We then prove that PVD-GSEMO-WR can find an optimal solution $O=\{v_2,v_3\}\cup X^{k-2}$ benefiting from the bit-wise mutation operator and repair strategy, respectively. Let $O^i~(2\leq i\leq k)$ denote the best feasible solution with $|O^i|=i$. We can verify that $O^i$ must contain $v_2$ and $v_3$, and inserting a specific element into $O^i$ can generate $O^{i+1}$. We reiterate that the definition of $X^i\subseteq V$ represents a feasible solution with $|X^i|=i~(0\leq i\leq k)$ and the set $\{v_2,v_3\}\nsubseteq X^i$. In the following proof, we first show that how PVD-GSEMO-WR can find $O^j$ from $X^i$ ($i<j\leq k$), and then follow the path $O^{j}\rightarrow O^{j+1}\rightarrow \cdots \rightarrow O^{k}=O$ to produce an optimal solution.

\textbf{[Bit-wise mutation]} When $X^i$ does not contain the peptide $v_1$, the idea is that flipping at most two 0-bits (corresponding to $v_2$ and $v_3$) of the solution $X^i$ and keeping other bits unchanged can find $O^{i+1}$ or $O^{i+2}$. Note that $X^i$ may contain $v_2$ or $v_3$, but will not contain them simultaneously due to the condition $\{v_2,v_3\}\nsubseteq X^i$. As the solutions in the population $P$ are incomparable and $f_2(X)=-|X|$, $P$ contains at most one solution for each subset size $0,1,\cdots,k$. Furthermore, the empty solution $X^0$ is guaranteed to be present in the initial population. Thus, the probability of generating $O^{i+1}$ or $O^{i+2}$ from $X^i$ is at least $(1/|P|)\cdot(1/n)^2\cdot(1-1/n)^{n-2}\ge 1/(en^2|P|)$, where $1/|P|$ is the probability of selecting $X^i$ by uniform selection in line~4 of Algorithm~\ref{alg:PVD-GSEMO}, and $(1/n)^2\cdot(1-1/n)^{n-2}$ is the probability of flipping two specific 0-bits (corresponding to $v_2$ and $v_3$) of the solution $X^i$ while keeping other bits unchanged in line~5. Because $|P| \leq k+1$, the expected number of iterations for finding $O^{i+1}$ or $O^{i+2}$ is at most $en^2|P|\leq e(k+1)n^2=O(kn^2)$.

\textbf{[Repair strategy]} We now show how PVD-GSEMO-WR escapes a local optimal solution $X^i~(1\leq i<k)$ that includes $v_1$. This is achieved by employing the repair strategy. The process begins by simultaneously flipping the two 0-bits in $X^i$ corresponding to $v_2$ and $v_3$. This action generates an infeasible solution ${v_1}\cup O^{i+1}$, which now contains the edges $(v_1,v_2)$ and $(v_1,v_3)$ in Figure~\ref{fig-example}. Subsequently, the repair strategy, as specified in lines~3--4 of Algorithm~\ref{alg-repair}, randomly chooses to remove either $v_1$ or $v_2$. Assuming it opts to exclude $v_1$, we then obtain the solution $O^{i+1}$. Thus, the probability of generating $O^{i+1}$ from $X^i$ is at least $(1/|P|)\cdot(1/n)^2\cdot(1-1/n)^{n-2}\cdot (1/2) \ge 1/(2en^2|P|)$, where $1/2$ is the probability of excluding $v_1$ from $\{v_1, v_2\}$ in line~5 of Algorithm~\ref{alg-repair}. This implies that the expected number of iterations is at most $2en^2|P|=O(kn^2)$.

\textbf{Taking the maximum} of the expected number of iterations, $O^j$ can be generated from $X^i$ ($i<j$) within $O(kn^2)$ expected number of iterations. Note that once $O^{j}~(2\leq j\leq k)$ is generated, it will always be kept in $P$, since it cannot be dominated by any other solution. The probability of the event $O^{j} \rightarrow O^{j+1}$ is at least $(1/|P|)\cdot(1/n)\cdot(1-1/n)^{n-1}\ge 1/(en|P|)$, since it is sufficient to select $O^{j}$ in line~4 of Algorithm~\ref{alg:PVD-GSEMO}, and then flip only one specific 0-bit corresponding to a peptide that has no edges with any peptide belonging to the subset $O^{j}$ in Figure~\ref{fig-example}. Because the length of the path $O^{j}\rightarrow O^{j+1}\rightarrow \cdots \rightarrow O^{k}=O$ is at most $k-2$, the total expected number of iterations for finding an optimal solution $O$ is at most $O(kn^2)+(k-2)\cdot en|P|=O(kn^2)$, implying that the theorem holds.
\end{proof}

From the proofs, we can find that the repair strategy improves the ability to jump out of the local optimum. Without the repair strategy, the probability of generating $O^{j}$ from the local optimal solution $X^i$ containing $v_1$ is at least $(1/|P|) \cdot(1/n)^3(1-1/n)^{n-3} \ge 1/(en^3(k+1))$, where $(1/n)^3(1-1/n)^{n-3}$ is the probability of flipping two 0-bits (corresponding to $v_2$ and $v_3$) and one 1-bit (corresponding to $v_1$) of $X^i$ while keeping other bits unchanged. Thus, the expected number of iterations to escape from local optima is $O(kn^3)$, which is only $O(kn^2)$ if using the repair strategy.

\section{Empirical Study}\label{experiments}

In this section, we examine the performance of PVD-EMO on a peptide vaccine design for COVID-19, by comparing its two variants, PVD-GSEMO-WR and PVD-NSGA-II-WR, with Optivax-P, the state-of-the-art greedy algorithm that has outperformed thirty other algorithms in the peptide vaccine design as reported in~\cite{dai2023constrained}. Note that PVD-GSEMO-WR and PVD-NSGA-II-WR correspond to PVD-EMO using GSEMO and NSGA-II, respectively. %The codes and data sets are provided in the supplemental material.

The experiments are mainly to answer two questions: Whether any variant of PVD-EMO is better than the previous algorithm Optivax-P? Can the warm-start and repair strategies improve efficiency and performance, respectively? 

\begin{figure}
    \centering
    \includegraphics[width=0.74\linewidth]{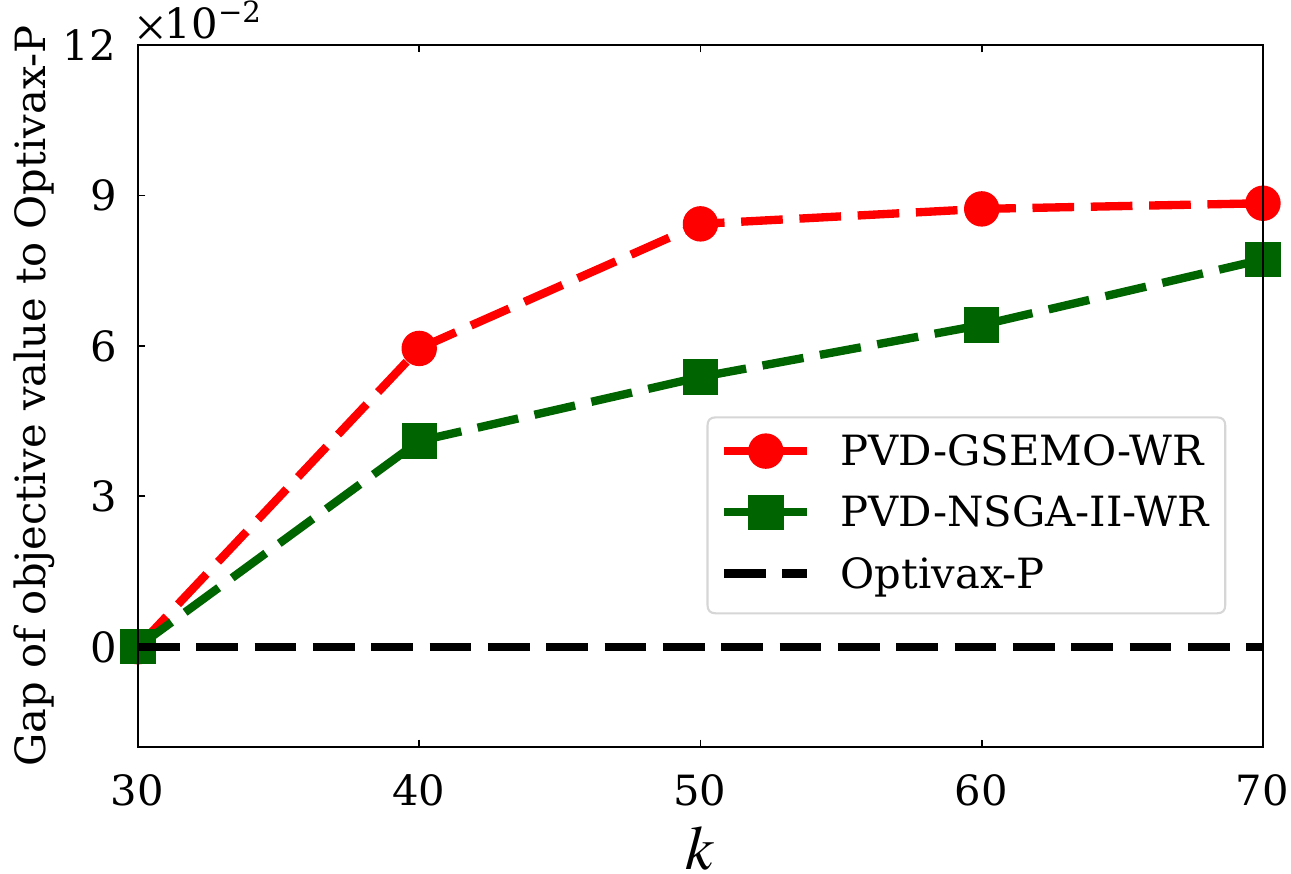}
    \caption{The average objective value of each algorithm minus the objective value of Optivax-P (the larger, the better).}
    \label{Objective-value-MHC-1}
\end{figure}

The initial populations of the two variants of PVD-EMO are generated using the warm-start strategy. Inspired by the recent theoretical work~\cite{zheng2022first}, the population size of PVD-NSGA-II-WR is set to 2 times the size of the Pareto front, i.e., $2(k+1)$. The warm-start strategy creates one feasible solution for each size $0,1,\cdots,k$ for PVD-GSEMO-WR, and two feasible solutions for each size for PVD-NSGA-II-WR, where the output of Optivax-P is used as one of the solutions for size $k$. PVD-NSGA-II-WR applies one-point crossover and bit-wise mutation with probabilities of 0.9 and 1, respectively. As PVD-EMO is an anytime algorithm, whose performance will be gradually improved by increasing the number of iterations, we set the number of objective evaluations to $20kn$, to make a trade-off between the performance and runtime, compared to $kn$ evaluations used by Optivax-P. Also it is randomized, and thus we run it ten times independently and report the average values. 

We use the same dataset of producing a peptide vaccine for COVID-19 as in~\cite{dai2023constrained}, which consists of a set of candidate peptides ($|V|=1043$), a set of genotypes ($|M|=1018459$) for Major Histocompatibility Complex class I (MHC-I), their frequencies $w(m)$ derived from diverse populations to be representative of the global population, and the binding probability $p_{v,m}$ for each peptide-MHC pair, generated by the SOTA neural network-based model NetMHCpan~\cite{NetMHCpan2}. The set of pairwise constraints require that no pair of peptides can be within 6 edits (insertions, deletions, or substitutions) of each other.

We set $k\!\in\!\{30,40,\ldots,70\}$ and $N\!=\!\lfloor 0.25k\rfloor$. The results are shown in Figure~\ref{Objective-value-MHC-1}. It can be observed that the performance of PVD-EMO will be at least as good as the competitive greedy algorithm Optivax-P. For a relatively simple problem ($k\!=\!30$), Optivax-P may find the optimal solution, in which case PVD-EMO performs equally well. However, as the problem complexity increases ($k\in\{40,50,60,70\}$), both PVD-GSEMO-WR and PVD-NSGA-II-WR surpass Optivax-P, showing the superiority of the PVD-EMO framework. This may be because PVD-EMO naturally maintains a population of diverse solutions due to the bi-objective transformation, the employed bit-wise mutation operator has a good global search ability, and the repair strategy further improves the search ability. These characteristics can lead to a better ability of escaping from local optima. Among the two variants of PVD-EMO, PVD-GSEMO-WR performs better than PVD-NSGA-II-WR, which may be because the population of NSGA-II may contain redundant dominated solutions, leading to the bad performance.
% 单调性画差值不能体现

\begin{figure}
    \centering
    \includegraphics[width=0.76\linewidth]{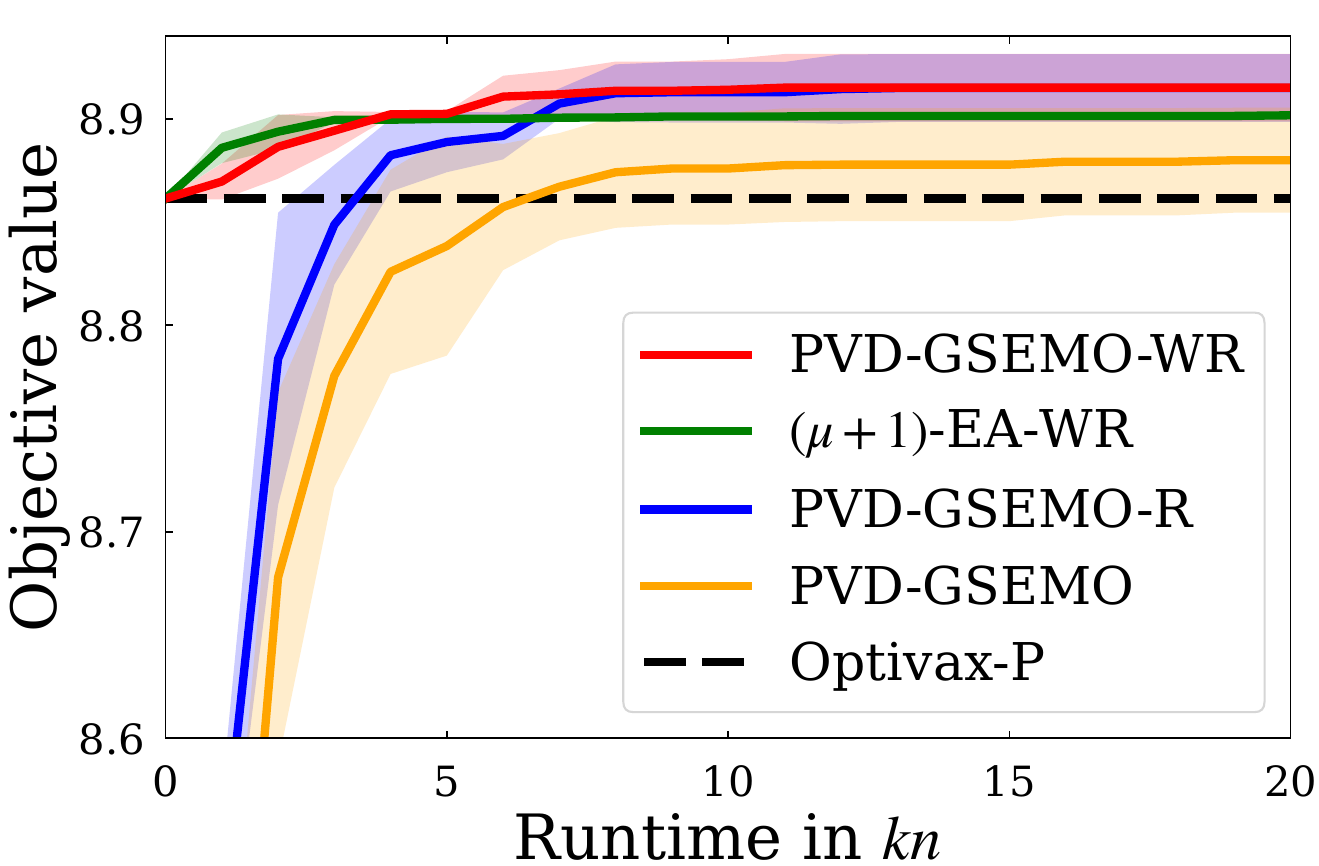}
    \caption{The average objective value $\pm$ the standard deviation vs.~runtime (i.e., number of objective evaluations) with $k=40$.}
    \label{Runtime-MHC-1}
\end{figure}

To more clearly examine the effectiveness of bi-objective reformulation, we conduct experiments using the single-objective EA $(\mu+1)$-EA. For fair comparison, $(\mu+1)$-EA, with a population size of $\mu=k+1$, follows the settings of PVD-GSEMO-WR and is labeled as $(\mu+1)$-EA-WR. The key difference is that $(\mu+1)$-EA-WR updates its population by retaining the best $\mu$ solutions according to their objective $f$ values, whereas PVD-GSEMO-WR preserves non-dominated solutions due to the bi-objective reformulation. Additionally, to assess the impact of warm-start and repair strategies, we test two variants: PVD-GSEMO-R, which solely employs the repair strategy, and PVD-GSEMO with no strategies. We plot the curve of the objective value over runtime with $k=40$ in Figure~\ref{Runtime-MHC-1}. Optivax-P is a fixed-time (nearly $kn$) algorithm, while the others are anytime algorithms that can achieve better performance with increased runtime. The results show that even without warm-start, both PVD-GSEMO-R and PVD-GSEMO outperform Optivax-P within $4kn$ and $7kn$, respectively. PVD-GSEMO-WR and PVD-GSEMO-R converge to the best objective value, while PVD-GSEMO-WR is faster due to the warm-start advantage. In comparison, $(\mu+1)$-EA-WR, despite employing both strategies, attains the second-best objective value, thus validating the value of bi-objective reformulation to maintain a diverse population. Meanwhile, PVD-GSEMO, lacking the repair strategy, only achieves the third-best objective value, underscoring the vital role of the repair strategy in avoiding local optima, as demonstrated in Theorem~\ref{theo-example}. These findings validate the effectiveness of bi-objective reformulation along with the warm-start and repair strategies. %The performance comparison, illustrated in Figure~\ref{Runtime-MHC-1} for $k=40$, shows that $(\mu+1)$-EA-WR achieves better objective values than Optivax-P, likely due to the global search capability of the bit-wise mutation operator. PVD-GSEMO-WR outperforms all, validating the superiority of multi-objective evolutionary algorithms.

\section{Conclusion}
This paper proposes the PVD-EMO framework, employing any MOEA to solve the bi-objective reformulated PVD problem. We incorporate warm-start and repair strategies to improve efficiency and performance. We prove that the warm-start strategy ensures that PVD-EMO maintains the same worst-case approximation guarantee as the state-of-the-art greedy algorithm Optivax-P. Furthermore, we prove that unlike Optivax-P, PVD-EMO can successfully avoid getting stuck in local optimal solutions. Empirical results on a peptide vaccine design for COVID-19 show that PVD-EMO can achieve better performance than Optivax-P. An interesting future work is to design better MOEAs (e.g., using fast mutation operator~\cite{fastmutation}) for PVD-EMO.
\section*{Acknowledgments}
The authors want to thank the anonymous reviewers for their helpful comments and suggestions. This work was supported by the National Science and Technology Major Project (2022ZD0116600) and National Science Foundation of China (62276124). Chao Qian is the corresponding author.

%% The file named.bst is a bibliography style file for BibTeX 0.99c
\bibliographystyle{named}
\bibliography{ijcai24}

\end{document}